\theoremstyle{plain}
\newtheorem{theorem}{Theorem}[section]
\theoremstyle{definition}
\theoremstyle{remark}
\definecolor{marron}{HTML}{CC88B0}
\title{TL-Training: A Task-Feature-Based Framework for Training\\Large Language Models in Tool Use}
\author{
\bf{Junjie Ye$^{1}$, Yilong Wu$^{1}$, Sixian Li$^{1}$, Yuming Yang$^{1}$,}\\
\bf{Zhiheng Xi$^{1}$, Tao Gui$^{1,4}$, Qi Zhang$^{1,4,5}$\thanks{Corresponding Author.}, Xuanjing Huang$^{1,4,5}$,}\\
\bf{Peng Wang$^{2}$, Zhongchao Shi$^{2}$, Jianping Fan$^{2}$, Zhengyin Du$^3$}\vspace{2mm}\\
{$^1$Fudan University}\ \ 
{$^2$Lenovo Research, Beijing, China}\ \ 
{$^3$ByteDance Seed}\\
{$^{4}$Shanghai Key Lab of Intelligent Information Processing}\ \ 
{$^{5}$Shanghai AI Laboratory}\vspace{2mm}\\
\texttt{jjye23@m.fudan.edu.cn, qz@fudan.edu.cn} 
}
\begin{document}
\maketitle
\begin{abstract}
Large language models (LLMs) achieve remarkable advancements by leveraging tools to interact with environments, a critical step toward generalized AI. However, the standard supervised fine-tuning (SFT) approach, which relies on large-scale datasets, often overlooks task-specific characteristics in tool use, leading to performance bottlenecks.
To address this issue, we analyze three existing LLMs and uncover key insights: training data can inadvertently impede tool-use behavior, token importance is distributed unevenly, and errors in tool calls fall into a small set of categories. Building on these findings, we propose~\emph{TL-Training}, a task-feature-based framework that mitigates the effects of suboptimal training data, dynamically adjusts token weights to prioritize key tokens during SFT, and incorporates a robust reward mechanism tailored to error categories, optimized through proximal policy optimization.
We validate TL-Training by training CodeLLaMA-2-7B and evaluating it on four open-source test sets. Our results demonstrate that the LLM trained by our method matches or surpasses both open- and closed-source LLMs in tool-use performance using only 1,217 training data points. Additionally, our method enhances robustness in noisy environments and improves general task performance, offering a scalable and efficient paradigm for tool-use training in LLMs. Code and data are available at~\url{https://github.com/Junjie-Ye/TL-Training}.
\end{abstract}

\section{Introduction}

\begin{figure}[!t]
    \centering
    \includegraphics[width=\linewidth]{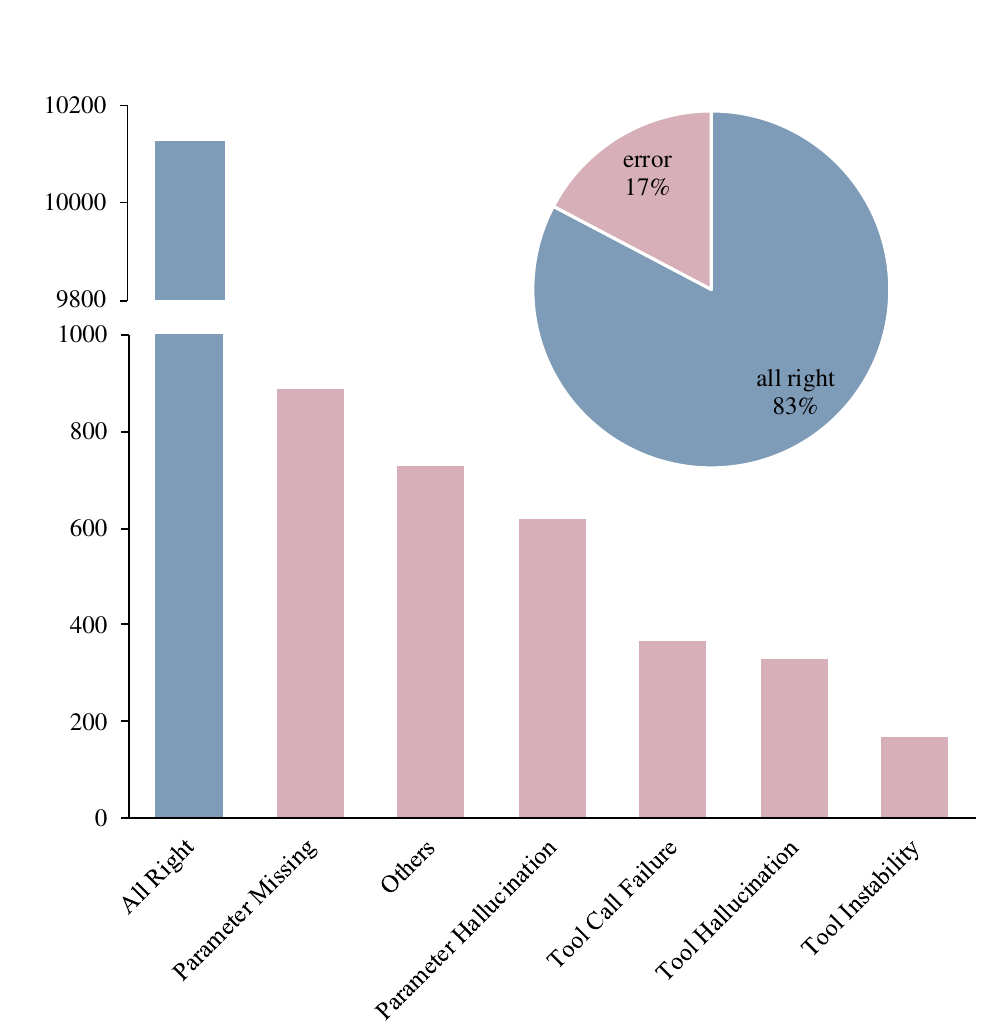}
    \caption{Error statistics for various tool calls in RoTLLaMA's training data.}
    \label{fig:error}

\end{figure}

Large language models (LLMs)~\cite{GPT-4, LLaMA-2, Qwen} excel in natural language understanding due to pre-training on extensive datasets~\cite{analysis-chen, analy-ye}. By incorporating tool-use capabilities, LLMs can extend beyond text generation to interact with the external environment, enabling tasks such as web searches and email management~\cite{Toolalpaca, ToolEyes}. Furthermore, these capabilities are essential for addressing real-world user needs and advancing the development of general-purpose AI~\cite{agent-survey}.  

Current approaches to training LLMs for tool use rely heavily on large-scale datasets generated from trajectories of interactions with tools~\cite{Toolllm, ToolChain}. Standard supervised fine-tuning (SFT) is then applied to pre-trained models. While effective in some cases, these methods overlook key task-specific characteristics, leading to performance bottlenecks. For instance, ToolLLaMA-2-7B-v2~\cite{Toolllm} achieves only 80\% of GPT-4’s performance on tool-use benchmarks~\cite{ToolEyes, Sealtools}, indicating room for significant improvement.  

To fill this gap, we conduct an in-depth analysis of three tool-using LLMs, uncovering several key phenomena. Notably, over 17\% of the training data for RoTLLaMA~\cite{RoTBench} contains tool-calling errors (Figure~\ref{fig:error}), primarily due to reliance on data from GPT series models, which are not entirely error-free with complex tools. Training on such flawed data can hinder model performance. Additionally, our tests of ToolLLaMA-2-7B-v2 and NexusRaven-13B-v2~\cite{nexusraven} reveal that incorrect tool selections often share a common prefix with the correct ones (Table~\ref{tab:error}), and correcting initial erroneous tokens can lead to successful predictions. This suggests that certain tokens are more critical in tool selection. Moreover, the types of errors produced by tool calls are relatively limited (Figure~\ref{fig:information}), providing a foundation for targeted improvements across various error categories.

\begin{table}[!t]
    \caption{Proportion statistics of various error cases in RoTBench (Clean) for ToolLLaMA-2-7B-v2 and NexusRaven-13B-v2. `First' indicates a mismatch in the initial token of the selected and correct tool names, while `Prefix' denotes a shared prefix between them. `Synonyms' captures instances where filled parameter values are synonymous with standard values.}
    \label{tab:error}
    \centering
    \resizebox{\linewidth}{!}
    {
    \begin{tabular}{llcc}
    \toprule
     \textbf{Aspect} & \textbf{Error} & \textbf{ToolLLaMA} & \textbf{NexusRaven} \\
     \midrule
      \multirow{2}*{Tool}  & First & 53.33\% & 65.22\% \\
      & Prefix & 46.67\% & 34.78\% \\
      \midrule
      \multirow{3}*{Parameter} & Redundancy & 46.43\% & 33.33\% \\
      & Missing & 57.14\% & 66.67\% \\
      & Hallucination & 17.86\% & 6.67\% \\
      \midrule
      \multirow{2}*{Content} & Synonyms & 73.68\% & 95.00\% \\
      & Others & 31.58\% & 5.00\% \\
    \bottomrule
    \end{tabular}
    }    

\end{table}

Based on these insights, we propose~\emph{TL-Training}, a task-feature-based framework for training LLMs in tool use. TL-Training mitigates the negative impact of training data by identifying erroneous interaction paths and excluding them from gradient updates. It prioritizes key tokens through adaptive weighting during SFT and incorporates tool feedback into a robust reward mechanism for reinforcement learning using the proximal policy optimization (PPO)~\cite{PPO}.  

We validate our approach by training CodeLLaMA-2-7B~\cite{CodeLLaMA} on a curated dataset of 1,217 tool-call trajectories generated with GPT-4o. Evaluations on four open-source test sets demonstrate that the model trained with TL-Training matches or surpasses the tool-use performance of leading open- and closed-source LLMs, despite requiring significantly less training data. Additionally, TL-Training improves robustness in noisy environments and enhances general task performance.  

In summary, our contributions are as follows:
\begin{itemize}
\item We identify three key insights in tool use, including the impact of erroneous data, the uneven importance of tokens, and the constrained range of tool-calling error categories.
\item We propose TL-Training, a novel task-feature-based framework  comprising of adverse effects mitigation, key tokens prioritization, and reinforcement learning for tool use.
\item We demonstrate the effectiveness of TL-Training by training CodeLLaMA-2-7B and achieving leading tool-use performance on multiple benchmarks with only 1,217 pieces of data.
\item We show that TL-Training enhances both robustness to noisy data and general task performance, highlighting its potential for scalable tool-use training.
\end{itemize}

\begin{figure*}[!t]
    \centering
    \includegraphics[width=\linewidth]{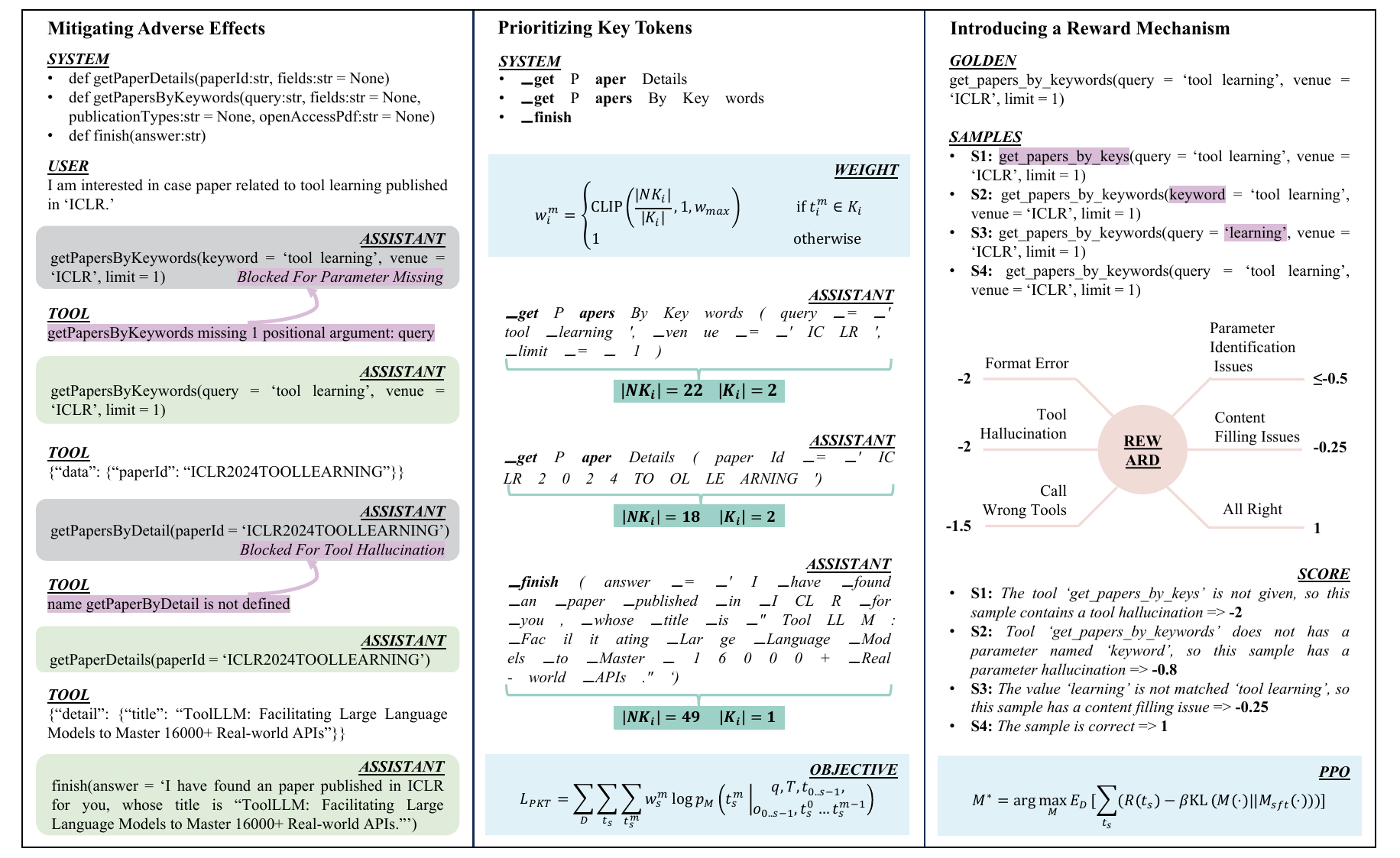}
    \caption{Framework of~\emph{TL-Training}. TL-Training comprises three main components: (\textbf{Left}) mitigating the adverse effects of suboptimal data by identifying erroneous interaction trajectories through tool feedback and blocking their gradient updates; (\textbf{Middle}) optimizing key tokens by dynamically adjusting token weights during the SFT process; and (\textbf{Right}) enhancing tool call performance through a reward mechanism tailored to tool invocation error types, using the PPO algorithm for reinforcement learning.}
    \label{fig:approach}

\end{figure*}

\section{Related Works}

\paragraph{Training LLMs for Tool Use}  
LLMs capable of utilizing external tools significantly enhance their ability to interact with dynamic environments and address user needs~\cite{tool-learning}. However, the diversity and complexity of real-world tools present significant challenges in training such models. Existing methods, such as SFT, rely on the generation of extensive datasets of tool interactions~\cite{RestGPT, Toolalpaca, ToolVerifier}, enabling models to learn tool functionalities, invoke appropriate tools, and process feedback. While effective, these methods are resource-intensive due to the large-scale data construction involved.  
To overcome these challenges, some studies have proposed encoding tool names as special tokens directly integrated into model training, embedding tool-specific knowledge into the model~\cite{Toolken}. This approach has shown promise for existing tools but remains limited in its ability to adapt to newly introduced tools. Building on these findings, our work introduces a novel training paradigm for tool-use LLMs, addressing both efficiency and adaptability. By leveraging a compact dataset of 1,217 data points and incorporating three task-specific components, our approach achieves state-of-the-art performance while significantly reducing data requirements.

\paragraph{Evaluating LLMs in Tool Use}  
Evaluating LLMs' tool-use capabilities is essential for understanding their effectiveness in diverse scenarios. A common evaluation method involves comparing predicted outputs with standard answers from a single turn of tool use~\cite{TEval}. However, in multi-turn interactions, the variability in invocation processes complicates the definition of a single standard path. To address this, evaluations increasingly consider multiple dimensions of tool-use processes and outcomes~\cite{ToolEyes}.  
Beyond tool-use performance, researchers have also investigated robustness and safety in practical scenarios~\cite{RoTBench, ToolSword}, which provide insights into how LLMs manage edge cases and avoid harmful outputs. In this paper, we evaluate LLMs across single-turn and multi-turn tool use to provide a more comprehensive assessment. Additionally, we analyze robustness to further demonstrate the superiority of our approach.

\section{Preliminaries}
\label{sec:pre}

\paragraph{Task Formulation}  
Given a model \( \mathcal{M} \), a user query \( q \), and a collection of tools \( \mathbb{T} \), the task of tool use requires \( \mathcal{M} \) to iteratively select the appropriate tool \( t_s \in \mathbb{T} \) at each step \( s \), process its feedback \( o_s \), and continue selecting subsequent tools \( t_{s+1} \) until the query is resolved and a final answer is obtained. Formally, this can be represented as \( t_{s+1} = \mathcal{M}(\cdot | q, \mathbb{T}, t_{0..s}, o_{0..s}) \).
This task is distinct from traditional natural language processing tasks, as it requires the model to invoke tools repeatedly and interpret their feedback dynamically. Despite its importance, to the best of our knowledge, there has been no systematic examination of the intrinsic properties of tool use. Thus, we aims to fill this gap by conducting an in-depth analysis focusing on both the training data and model performance.

\paragraph{Data Analysis}  
For our analysis, we use the training set from RoTLLaMA, which includes 12,247 filtered multi-turn tool-call trajectories generated by GPT-4. Illustrated in Figure~\ref{fig:error}, 17\% of these trajectories contain various errors, indicating that even advanced models like GPT-4 encounter challenges with complex tools. These erroneous trajectories pose a challenge for models trained through SFT, as they inherit these error patterns during learning. This predisposition to incorrect tool invocation highlights the need for more robust training methods to mitigate error propagation and improve overall model performance.

\paragraph{Performance Analysis}  
We evaluate the performance of ToolLLaMA-2-7B-v2 and NexusRaven-13B-v2, which are built on LLaMA-2-7B~\cite{LLaMA-2} and CodeLLaMA-2-13B~\cite{CodeLLaMA}, respectively, as representative tool-using LLMs. Our evaluation uses RoTBench (Clean)~\cite{RoTBench}, a manually labeled dataset for the single-turn tool-use task with standardized answers. The analysis focuses on errors related to tool selection, parameter identification, and content filling, with results shown in Table~\ref{tab:error}. We observe that when the model selects the wrong tool, it often chooses one with a prefix similar to the correct tool. By manually correcting the first incorrectly predicted token, the model can generate the correct one, suggesting that certain tokens are crucial for task success. Additionally, errors in parameter identification and content generation highlight areas where further training is required.


\section{Approaches}

Building on the analysis in Section~\ref{sec:pre}, we propose TL-Training, a novel training paradigm for LLMs in tool use. As shown in Figure~\ref{fig:approach}, this paradigm incorporates three core techniques: mitigating the adverse effects of suboptimal data by preventing its back-propagation (Section~\ref{sec:block}), prioritizing key tokens using adaptive weight adjustments (Section~\ref{sec:adapt}), and implementing a reward mechanism tailored to tool invocation error categories to enable effective reinforcement learning (Section~\ref{sec:rl}).\footnote{Theoretical proofs of the effectiveness of our proposed approaches is provided in Appendix~\ref{sec:proof}.}

\subsection{Mitigating Adverse Effects}
\label{sec:block}

During the SFT stage, the objective is to align LLMs with the distribution of the training data. However, erroneous interaction paths in the data can negatively affect the model's decision-making, leading to an increased likelihood of incorrect tool calls. To address this, we design an automated process that identifies erroneous interaction paths and blocks their back-propagation, thereby reducing their harmful impact on the model.

Given a data sequence \( (q, t_{0..s}, o_{0..s})\), we seek to identify the erroneous tool call trajectory \(\mathbb{T}_e \subseteq \{t_0, t_1, \dots, t_s\}\). Directly determining whether a specific \(t_i\) is correct is challenging. However, the feedback \(o_i\) generated after each tool call contains structured error-reporting information, as summarized in Figure~\ref{fig:information}.\footnote{Specific examples can be found in Appendix~\ref{sec:information}.} We automate the identification of incorrect calls by sequentially analyzing \(o_i\) to extract \(\mathbb{T}_e\).

Once \(\mathbb{T}_e\) is identified, we mitigate the impact of these erroneous interactions by blocking their back-propagation during training. This is achieved by modifying the loss function as follows:
\[
\begin{split}
\mathcal{L}_{MAE} = & -\sum_{\mathbb{D}} \sum_{t_{s} \notin \mathbb{T}_e} \log p_M(t_{s} | q, \mathbb{T},\\
& t_{0..s-1}, o_{0..s-1}),
\end{split}
\]
where \(\mathbb{D}\) represents the entire training dataset.

\begin{figure}
    \centering
    \includegraphics[width=0.8\linewidth]{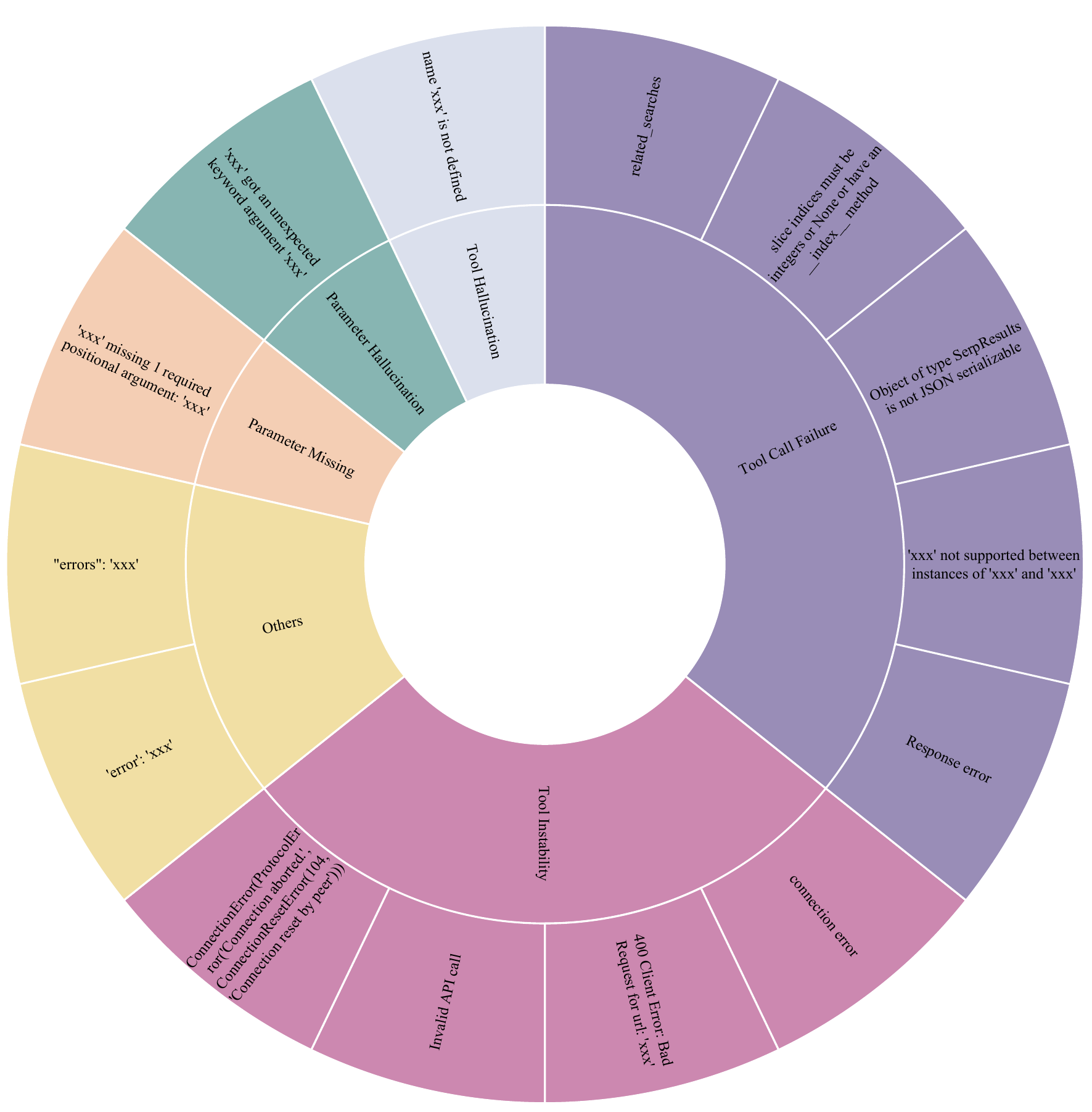}
    \caption{Types of errors encountered by LLMs during tool use and their corresponding feedback messages.}
    \label{fig:information}

\end{figure}

\subsection{Prioritizing Key Tokens}
\label{sec:adapt}
Based on the analysis in Section~\ref{sec:pre}, and the insights from rows 1-2 of Table~\ref{tab:error}, we observe that the first token of a tool name, along with any subsequent token that shares a common prefix with other tool names, plays a more critical role in successful tool identification. As such, these tokens are more challenging for LLMs to generate correctly. However, standard SFT training maximizes the conditional probability of each token without distinction, treating all tokens as equally important. To address this limitation, we propose a scheme that adaptively adjusts the training weights of tokens according to their relative importance.

Given a data sequence \((q, t_{0..s}, o_{0..s})\), where each tool \(t_i = (t_i^0, t_i^1, \ldots, t_i^{l_i})\) consists of \(l_i\) tokens, we categorize the tokens into two sets: 
\[
K_i = \{t_i^m \in t_i \mid t_i^m \text{ is a key token}\}
\]
\[
NK_i = \{t_i^m \in t_i \mid t_i^m \text{ is not a key token}\}
\]
We then adjust the weights of \(K_i\) and \(NK_i\) based on their relative importance, allowing the model to focus more on the key tokens.

\[
w_i^m = 
\begin{cases} 
\text{CLIP}\left(\frac{|NK_i|}{|K_i|}, 1, w_{\text{max}}\right) & \text{if } t_i^m \in K_i \\ 
1 & \text{otherwise}
\end{cases}
\]

Here, \(w_{\text{max}}\) is the maximum adjustment multiplier, and \(\text{CLIP}(x, \text{min}, \text{max})\) is used to constrain the adjustment factor to lie within the range \([\text{min}, \text{max}]\). The notation \(|\cdot|\) represents the size of the set.\footnote{Since \(K_i\) always includes at least the first token of the tool name, we avoid any risk of dividing by zero.}

With these computed weights, we prioritize key tokens during training with the following objective:
\[
\begin{split}
\mathcal{L}_{PKT} = & -\sum_\mathbb{D} \sum_{t_s} \sum_{t_s^m} w_s^m \cdot \log p_M(t_s^m \mid q, \mathbb{T},\\
& t_{0..s-1}, o_{0..s-1}, t_s^0 \ldots t_s^{m-1}).
\end{split}
\]

\subsection{Introducing a Reward Mechanism}
\label{sec:rl}
The three stages of tool use by LLMs are interdependent, where an error in any stage can lead to the failure of the entire tool invocation. Fortunately, the types of errors that arise are limited, enabling us to introduce a reward mechanism based on these specific errors. This allows us to apply reinforcement learning algorithms that help align the model more closely with human intent and enhance its tool-use proficiency. To achieve this, we define a set of reward functions tailored to the tool use task and employ the PPO algorithm to optimize the model’s performance.

Given an LLM-generated tool call prediction \( t_i \) and its corresponding ground truth, we define the following reward function based on the quality of the LLM's tool use in various scenarios:
\[
R(t_i) =
\begin{cases} 
  -2 & \text{if } t_i \text{ cannot be parsed} \\
  -2 & \text{if } t_i \text{ contains tool hallucinations} \\
  -1.5 & \text{if } t_i \text{ calls the wrong tool} \\
  R_{\text{p}}(t_i) & \text{if } t_i \text{ has parameter issues} \\
  -0.25 & \text{if } t_i \text{ has content filling issues} \\
  1 & \text{if } t_i \text{ is correct}
\end{cases}
\]
where \( R_{\text{p}}(t_i) \) is defined as:
\[
\begin{split}
    R_{\text{p}}(t_i) = & -0.8 \cdot \mathbb{I}(t_i \text{ has parameter hallucinations}) \\
     & - 0.5 \cdot \mathbb{I}(t_i \text{ has redundant parameters})\\
     & - 0.5 \cdot \mathbb{I}(t_i \text{ has missing parameters})
\end{split}
\]
where \(\mathbb{I}(\cdot)\) represents the indicator function.

This reward function \( R \) addresses the different potential errors in LLM tool use, providing a structured scoring system to assess performance. Based on this, we apply the PPO algorithm, which iteratively optimizes the model’s parameters to maximize these rewards as follows:
\[
\begin{split}
    \mathcal{M}^* = & \arg\max_\mathcal{M} \mathbb{E}_\mathbb{D} [\sum_{t_s} (R(t_s)- \\
    & \beta \text{KL}(\mathcal{M}(\cdot)||\mathcal{M}_{sft}(\cdot)))]
\end{split}
\]

where $\beta$ regulates deviation from the initial SFT model $\mathcal{M}_{sft}$.
This approach enables the LLM to progressively refine its understanding and improve the accuracy of its tool usage over time.

\begin{table}[!t]
\caption{Statics of datasets used. `\# Number' represents the number of data in the dataset. `Type' represents the type of tool use.}
\label{tab:dataset}
\centering
\resizebox{\linewidth}{!}
{
\begin{tabular}{l lcc} 
\toprule
 \textbf{Split} &  \textbf{Dataset} &  \textbf{\# Number} & \textbf{Type} \\
 \midrule
 Train & Self-Construct& 1217 & Multi-Turn   \\
 \midrule
 \multirow{4}*{Test} & ToolAlpaca & 114 & Single-Turn  \\
 & RoTBench & 105 & Single-Turn  \\
 & BFCL-v3 & 239 & Single-Turn   \\
 & ToolEyes& 382 & Multi-Turn   \\
\bottomrule
\end{tabular}
}

\end{table}

\section{Experimental Setup}

\subsection{Dataset}

As shown in Table~\ref{tab:dataset}, to validate our approach, we construct a custom training set focused on multi-turn tool use and evaluate it using four publicly available test sets (i.e., ToolAlpaca~\cite{Toolalpaca}, RoTBench~\cite{RoTBench}, BFCL-v3~\cite{BFCL}, and ToolEyes).\footnote{Details of the datasets can be found in Appendix~\ref{sec:detail-datasets}}

\begin{table*}[!t]
    \caption{Performance of various LLMs on single-turn test sets. `Avg.' represents the average performance across all LLMs. Individual LLM performances are color-coded for clarity: \colorbox{teal}{teal} highlights better-than-average performance, while \colorbox{marron}{purple} indicates below-average performance. Darker shades signify greater deviations from the average. The best performance in each column is indicated in \textbf{bold}.}
    \label{tab:main_single}
    \centering
    \resizebox{\linewidth}{!}
    {
    \begin{tabular}{llccccccccc}
    \toprule
         \multirow{2}*{\textbf{Models}}& \multirow{2}*{\textbf{Size}} & \multicolumn{3}{c}{\textbf{ToolAlpaca}} & \multicolumn{3}{c}{\textbf{RoTBench}} & \multicolumn{3}{c}{\textbf{BFCL-v3}}  \\ \cmidrule(lr){3-5} \cmidrule(lr){6-8} \cmidrule(lr){9-11} 
&  & \textbf{TS} ($\uparrow$) & \textbf{PI} ($\uparrow$) & \textbf{CF} ($\uparrow$) & \textbf{TS} ($\uparrow$) & \textbf{PI} ($\uparrow$) & \textbf{CF} ($\uparrow$) & \textbf{TS} ($\uparrow$) & \textbf{PI} ($\uparrow$) & \textbf{CF} ($\uparrow$) \\ \midrule
    \textit{Avg.} & & \textit{80.63} & \textit{65.09} & \textit{42.72} & \textit{74.10} & \textit{49.90} & \textit{35.43} & \textit{93.13} & \textit{89.88} &\textit{74.17} \\ \midrule
         ToolLLaMA-2& 7B & \cellcolor{marron!15}75.56 & \cellcolor{marron!12}61.40 & \cellcolor{marron!15}37.72 & \cellcolor{marron!12}70.48 & \cellcolor{marron!18}43.81 & \cellcolor{marron!30}25.71 & \cellcolor{marron!15}87.08 & \cellcolor{marron!18}83.75 & \cellcolor{marron!54}56.67 \\
         NexusRaven-2&13B & \cellcolor{teal!6}82.46 & \cellcolor{marron!51}48.25 & \cellcolor{marron!15}37.72 & \cellcolor{marron!12}70.48 & \cellcolor{teal!21}{56.19} & \cellcolor{teal!6}37.14 & \cellcolor{teal!12}97.08 & \cellcolor{teal!15}94.17 & \cellcolor{teal!3}75.83 \\ \midrule
         ChatGLM-4-chat&9B & \cellcolor{marron!21}73.68 & \cellcolor{teal!9}68.42 & \cellcolor{marron!12}38.60 & \cellcolor{marron!21}67.62 & \cellcolor{teal!12}53.33 & \cellcolor{teal!6}37.14 & \cellcolor{teal!6}95.42 & \cellcolor{teal!12}93.33 & \cellcolor{teal!36}86.67 \\
         Qwen-2-Instruct&7B & \cellcolor{teal!18}86.84 & \cellcolor{teal!9}68.42 & \cellcolor{teal!3}43.86 & \cellcolor{teal!1}74.29 & \cellcolor{marron!6}47.62 & \cellcolor{marron!1}35.24 & \cellcolor{teal!15}98.33 & \cellcolor{teal!18}{95.42} & \cellcolor{teal!33}85.00 \\ 
         LLaMA-3.1-Instruct & 8B & \cellcolor{teal!12}84.21 & \cellcolor{marron!18}59.65 & \cellcolor{teal!1}42.98 & \cellcolor{marron!36}62.86 & \cellcolor{marron!96}17.14 & \cellcolor{marron!81}8.57& \cellcolor{marron!90}63.75 & \cellcolor{marron!100}59.58 & \cellcolor{marron!90}34.58 \\
         Qwen-2.5-Instruct& 7B & \cellcolor{teal!36}{\textbf{92.11}} & \cellcolor{teal!9}68.42 & \cellcolor{teal!6}44.74 & \cellcolor{teal!18}80.00 & \cellcolor{marron!51}32.38 & \cellcolor{marron!60}15.24 & \cellcolor{teal!21}{\textbf{100.00}} & \cellcolor{teal!18}{95.42} & \cellcolor{teal!30}87.92\\
         \midrule
         GPT-3.5-turbo & -& \cellcolor{marron!24}72.81 & \cellcolor{marron!33}54.39 & \cellcolor{marron!9}39.47 & \cellcolor{teal!1}74.29 & \cellcolor{teal!36}61.90 & \cellcolor{teal!39}48.57 & \cellcolor{teal!18}99.17 & \cellcolor{teal!18}\textbf{95.83} & \cellcolor{teal!3}75.83 \\
         GPT-4o & -& \cellcolor{marron!12}76.32 & \cellcolor{teal!15}70.18 & \cellcolor{marron!2}42.11 & \cellcolor{teal!1}74.29 & \cellcolor{teal!39}62.86 & \cellcolor{teal!45}50.48 & \cellcolor{teal!9}96.67 & \cellcolor{teal!12}93.75 & \cellcolor{teal!1}74.58 \\
         GPT-4-turbo & -& \cellcolor{marron!18}74.56 & \cellcolor{teal!24}73.68 & \cellcolor{marron!2}42.11 & \cellcolor{teal!24}82.86 & \cellcolor{teal!60}\textbf{69.52} & \cellcolor{teal!54}\textbf{53.33} & \cellcolor{teal!12}97.50 & \cellcolor{teal!12}93.75 & \cellcolor{teal!6}76.25 \\ \midrule
         TL-CodeLLaMA-2&7B & \cellcolor{teal!21}87.72 & \cellcolor{teal!39}{\textbf{78.07}} & \cellcolor{teal!45}{\textbf{57.89}} & \cellcolor{teal!27}{\textbf{83.81}} & \cellcolor{teal!15}{54.29} & \cellcolor{teal!21}{42.86} & \cellcolor{teal!9}96.25 & \cellcolor{teal!12}93.75 & \cellcolor{teal!52}{\textbf{88.33}} \\ 
 \bottomrule
    \end{tabular}
    }
\end{table*}

\subsection{Baselines}
We conduct a comprehensive comparison of ten LLMs from three different categories. These include \textbf{ToolLLaMA-2-7B} and \textbf{NexusRaven-2-13B} as tool-use LLMs; \textbf{ChatGLM-4-chat-9B} \cite{ChatGLM}, \textbf{Qwen-2-Instruct-7B} \cite{Qwen-2}, \textbf{LLaMA-3.1-Instruct-8B} \cite{LLaMA3.1}, and \textbf{Qwen-2.5-Instruct-7B} \cite{Qwen2.5} as open-source LLMs; \textbf{GPT-3.5-turbo}, \textbf{GPT-4o}, and \textbf{GPT-4-turbo} as closed-source LLMs; and \textbf{TL-CodeLLaMA-2}, developed from CodeLLaMA-2-7B with the custom dataset.\footnote{Details of baselines can be found in Appendix~\ref{sec:detail-base}.}

\begin{table}[!t]
    \caption{Performance on the multi-turn test set.
    }
    \label{tab:main_multi}
    \centering
    \resizebox{\linewidth}{!}
    {
    \begin{tabular}{lccc}
    \toprule
         \multirow{2}*{\textbf{Models}} & \multicolumn{3}{c}{\textbf{ToolEyes}} \\ \cmidrule(lr){2-4}
         &  \textbf{DE} ($\downarrow$) & \textbf{CE} ($\downarrow$) & \textbf{VA} ($\uparrow$) \\ \midrule
         \textit{Avg.} &\textit{3.91} &\textit{12.46} & \textit{65.56} \\ \midrule
         ToolLLaMA-2  & \cellcolor{marron!54}21.00 & \cellcolor{marron!72}36.62 & \cellcolor{marron!39}52.36\\ \midrule
         ChatGLM-4-chat & \cellcolor{teal!9}0.17 & \cellcolor{marron!60}32.45 & \cellcolor{marron!36}43.45 \\ 
         Qwen-2-Instruct  & \cellcolor{teal!9}0.78 & \cellcolor{teal!18}6.71 & \cellcolor{teal!15}70.16 \\ 
         LLaMA-3.1-Instruct  & \cellcolor{marron!3}4.80 & \cellcolor{teal!27}\textbf{3.76} & \cellcolor{marron!100}4.71 \\
         Qwen-2.5-Instruct  & \cellcolor{marron!3}4.78 & \cellcolor{teal!24}4.60 & \cellcolor{teal!27}74.08\\
         \midrule
         GPT-3.5-turbo & \cellcolor{teal!3}2.36 & \cellcolor{teal!6}10.73 & \cellcolor{teal!72}89.79 \\
         GPT-4o & \cellcolor{teal!9}\textbf{0.12} & \cellcolor{teal!24}4.64 & \cellcolor{teal!66}87.43 \\
         GPT-4-turbo & \cellcolor{teal!9}0.34 & \cellcolor{teal!15}7.83 & \cellcolor{teal!75}\textbf{90.31} \\ \midrule
         TL-CodeLLaMA-2 & \cellcolor{teal!9}0.82 & \cellcolor{teal!24}4.84 & \cellcolor{teal!36}77.75 \\ \bottomrule
    \end{tabular}
    }

\end{table}

\subsection{Metrics}

For \textbf{single-turn} tool use, where the original dataset provides a standard answer, we follow~\citet{RoTBench} and assess the model's performance across three key areas:  
    1) \textbf{Tool Selection (TS)} measures the model's accuracy in selecting the tool specified by the standard answer;
    2) \textbf{Parameter Identification (PI)} evaluates the model's ability to correctly select the tool and identify the relevant parameters required for invocation;
    and 3) \textbf{Content Filling (CF)} assesses the model's capacity to complete the single-turn tool invocation, including selecting the correct tool, identifying relevant parameters, and filling in the appropriate values.

For \textbf{multi-turn} tool use, where no standardized interaction path exists, we adapt the methods of \citet{Toolllm} and \citet{ToolEyes}, and assess performance based on following metrics:  
    1) \textbf{Documentation Understanding Error (DE)} represents the percentage of errors resulting from the model's failure to interpret the tool documentation, encompassing tool hallucinations, parameter hallucinations, and missing necessary parameters;
    2) \textbf{Tool Call Error (CE)} denotes the proportion of errors arising from incorrect tool invocation, covering all error types except those classified as DE;
    and 3) \textbf{Valid Answers (VA)} evaluates the percentage of instances where the model delivers valid responses within nine turns.

\begin{table*}[!t]
    \caption{The ablation studies of the three components of our method, with better results than \textbf{Standard SFT} labeled in \colorbox{teal}{teal}, poorer results labeled in \colorbox{marron}{purple}. Darker colors indicate larger gaps.}
    \label{tab:ablation}
    \centering
    \resizebox{\linewidth}{!}
    {
    \begin{tabular}{lccccccccc}
    \toprule
        \multirow{2}*{\textbf{Models}} & \multicolumn{3}{c}{\textbf{ToolAlpaca}} & \multicolumn{3}{c}{\textbf{RoTBench}} & \multicolumn{3}{c}{\textbf{ToolEyes}}  \\ \cmidrule(lr){2-4} \cmidrule(lr){5-7} \cmidrule(lr){8-10}
         & \textbf{TS} ($\uparrow$) & \textbf{PI} ($\uparrow$) & \textbf{CF} ($\uparrow$) & \textbf{TS} ($\uparrow$) & \textbf{PI} ($\uparrow$) & \textbf{CF} ($\uparrow$) & \textbf{DE} ($\downarrow$)  & \textbf{CE} ($\downarrow$) & \textbf{VA} ($\uparrow$)\\ \midrule
        Standard SFT (\textit{w/ None}) & \textit{74.56} & \textit{70.18 }& \textit{42.98 }& \textit{76.19} & \textit{55.24} & \textit{38.10} & \textit{0.72} & \textit{9.08} &\textit{ 59.32} \\ \midrule
         \textit{~~~~w/ MAE} & \cellcolor{marron!3}73.68 & \cellcolor{marron!15}64.91 & \cellcolor{teal!3}43.68 & \cellcolor{teal!6}78.10 & \cellcolor{teal!6}57.14 & \cellcolor{teal!6}40.95 & \cellcolor{marron!2}1.22 & \cellcolor{teal!12}5.53 & \cellcolor{teal!27}68.85 \\
         \textit{~~~~w/ PKT} & \cellcolor{teal!9}77.78 & \cellcolor{teal!3}71.72 & \cellcolor{teal!3}43.86 &\cellcolor{teal!18}82.86 & \cellcolor{teal!24}63.81 & \cellcolor{teal!30}48.57  & \cellcolor{marron!1}1.07 & \cellcolor{teal!6}7.52 & \cellcolor{teal!12}63.61  \\
         \textit{~~~~w/ IRM} & \cellcolor{teal!42}88.60 & \cellcolor{teal!42}84.21 & \cellcolor{teal!45}57.02 & \cellcolor{teal!9}79.05 & \cellcolor{teal!12}59.05 & \cellcolor{teal!18}44.76 & \cellcolor{teal!1}0.65 & \cellcolor{teal!9}6.80 & \cellcolor{marron!6}57.33 \\ \midrule
         \textit{~~~~w/ MAE \& PKT} & \cellcolor{marron!3}73.68 & \cellcolor{marron!15}64.91 & \cellcolor{teal!3}43.86 & \cellcolor{teal!12}80.95 & \cellcolor{teal!3}56.19 & \cellcolor{teal!6}40.00 & 0.71 & \cellcolor{marron!3}10.07 & \cellcolor{teal!48}75.13 \\ 
         \textit{~~~~w/ MAE \& IRM} & \cellcolor{teal!39}87.72 & \cellcolor{teal!24}78.95 & \cellcolor{teal!45}57.89 & \cellcolor{teal!18}82.86 & \cellcolor{teal!3}56.19 & \cellcolor{teal!24}45.71 & \cellcolor{marron!1}0.96 & \cellcolor{teal!12}5.03 & \cellcolor{teal!36}71.20 \\
         \textit{~~~~w/ PKT \& IRM} & \cellcolor{teal!36}86.84 & \cellcolor{teal!27}79.82 & \cellcolor{teal!45}57.02 & \cellcolor{teal!18}82.86 & \cellcolor{teal!24}63.81 & \cellcolor{teal!30}48.57 & \cellcolor{marron!1}0.88 & \cellcolor{teal!6}7.11 & \cellcolor{teal!18}65.18 \\ \midrule
         
         ~~~~\textit{w/ All} (TL-CodeLLaMA-2) & \cellcolor{teal!39}87.72 & \cellcolor{teal!24}78.07 & \cellcolor{teal!45}57.89 & \cellcolor{teal!21}83.81 & \cellcolor{marron!3}54.29 & \cellcolor{teal!12}42.86 & \cellcolor{marron!1}0.82 & \cellcolor{teal!15}4.84 & \cellcolor{teal!54}77.75 \\ 
           \bottomrule
    \end{tabular}
    }

\end{table*}

\subsection{Implementation Details}
In the \textbf{SFT} stage, we use 1,217 constructed data samples, applying both the MAE and PKT strategies. We employ the AdamW optimizer~\cite{Adamw} with cosine scheduling, setting the learning rate to 1e-6, a warmup rate of 0.01, and a batch size of 4, training for a total of 1 epoch. For the PKT strategy, \(w_{max}\) is set to 9. 
In the \textbf{RL} stage, we filter 1,194 entries from the constructed data and apply PPO with the reward function described in Section~\ref{sec:rl}. The actor learning rate is set to 2e-6, the critic learning rate to 1e-6, and the batch size to 8, training for a total of 3 epochs.
For \textbf{testing}, we use the official prompt template for tool invocation and apply greedy search.\footnote{Templates for each LLM are provided in Appendix~\ref{sec:template}.}

\section{Experiments}
\label{sec:experiments}

\subsection{Main Results}
\label{sec:main}

We evaluate the performance of various LLMs on three single-turn tool-use test sets and one multi-turn tool-use test set, with the results summarized in Table~\ref{tab:main_single} and Table~\ref{tab:main_multi}.\footnote{Since NexusRaven-2-13B does not receive tool feedback during interactions, it is evaluated only on the single-turn tool-use datasets.}
Despite using the smallest model size and the least amount of training data, our approach achieves results comparable to the best-performing models. These findings demonstrate the potential for smaller, efficient models to excel in the tool use task, making advanced capabilities more accessible for resource-constrained environments.

\paragraph{Single-Turn Evaluation}
Results from three single-turn tool-use test sets demonstrate that TL-CodeLLaMA-2, with only 7B parameters and 1,217 training examples, surpasses all other open-source LLMs in overall task completion (i.e., CF). Remarkably, on the ToolAlpaca and BFCL-v3 datasets, TL-CodeLLaMA-2 outperforms GPT-4-turbo, the top-performing GPT family model, by an impressive 15.78\% and 12.08\%, respectively. Most notably, TL-CodeLLaMA-2 is the \emph{only} model
that consistently exceeds the average performance across all three aspects of every dataset evaluated, highlighting the effectiveness of our approach in enhancing single-turn tool usage capabilities.

\paragraph{Multi-Turn Evaluation}
In the multi-turn test set, our approach significantly enhances the ability of LLMs to handle the tool use task. TL-CodeLLaMA-2 achieves an total error rate of just 5.64\% on the test set, second only to GPT-4o, which had the lowest error rate at 4.76\%, and ahead of Qwen-2-Instruct at 7.49\%. Additionally, TL-CodeLLaMA-2 maintaines a low error rate while achieving a high effective response rate, outperforming all other open-source models. In contrast, LLaMA-3.1-Instruct-8B frequently fails to provide a valid direct answer, effectively rendering it incapable of completing the task. These results highlight that our trained model effectively uses tools in multi-turn settings to solve complex user queries.

\subsection{Ablation Studies}
\label{sec:ablation}

To assess the individual contributions of the three components in our design that enhance LLMs' tool-use capabilities, we conduct ablation studies, comparing model performance across various scenarios. The results are shown in Table~\ref{tab:ablation}.

When compared to standard SFT without additional techniques (i.e., w/ None), masking erroneous interaction paths during training (e.g., w/ MAE) reduces the model's overall error rate in multi-turn tool use by nearly one-third and increases effective responses by 9.53\%. This suggests that removing these erroneous paths prevents LLMs from learning incorrect tool-use patterns. However, since such errors are rare in single-turn tool use, the improvement in that case is less pronounced and requires additional techniques for further gains (e.g., w/ MAE \& IRM). Similarly, optimizing the weights of key tokens during training (e.g., w/ PKT) enhances the model’s ability to differentiate between similar tools, improving tool selection accuracy and overall performance. Furthermore, reinforcement learning with our proposed reward function (e.g., w/ IRM) further improves performance across all stages by dynamically optimizing the entire tool-use process, addressing diverse errors encountered during tool use. Finally, TL-CodeLLaMA-2 (i.e., w/ All), which integrates all three strategies, maximizes their combined advantages, significantly improving LLM performance in both single-turn and multi-turn tool use with only 1,217 data points, demonstrating the effectiveness of our approach.

\begin{figure}[!t]
    \centering
    \includegraphics[width=\linewidth]{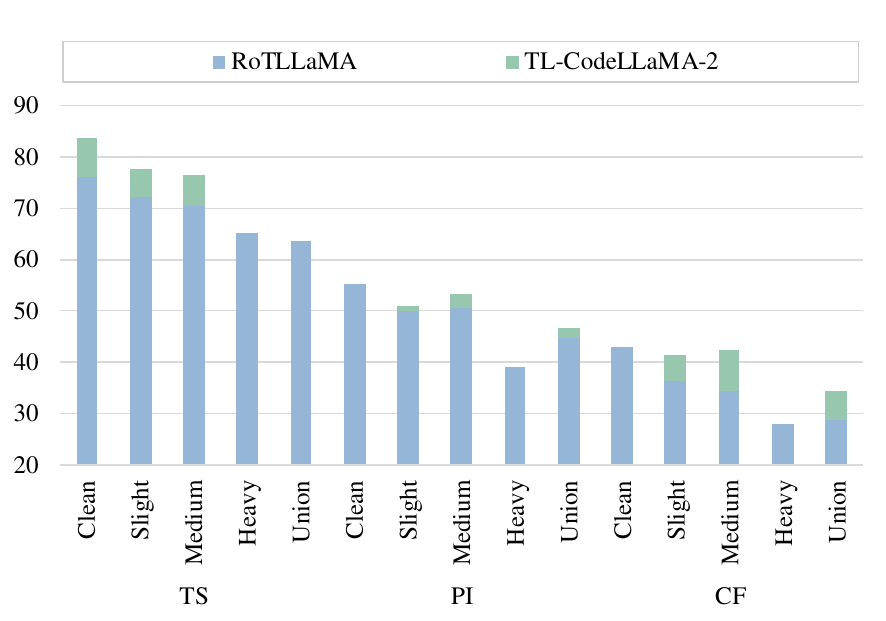}
    \caption{Performance comparison of RoTLLaMA and TL-CodeLLaMA-2 in different noise environments. RoTLLaMA's results are from~\citet{RoTBench}.}
    \label{fig:robust}

\end{figure}

\section{Further Studies}

\subsection{Robustness Improvement}
\label{sec:robustness}

In real-world environments, tools often contain various types of noise, and LLMs must be robust in their tool use to effectively meet user needs across different situations. RoTBench provides five tool-use test environments with varying noise levels, designed to evaluate whether LLMs can accurately understand the functions and properties of different tools and execute effective invocations. We compare the performance of TL-CodeLLaMA-2 and RoTLLaMA across these five noisy environments, as shown in Figure~\ref{fig:robust}. While RoTLLaMA has been optimized for such environments through targeted noise augmentation, TL-CodeLLaMA-2, without specific optimizations, matches or exceeds RoTLLaMA's performance in all aspects. This suggests that our approach allows the model to focus on the core functionality of external tools without being hindered by noise, making it more adaptable to real-world scenarios.

\begin{figure}[!t]
    \centering
    \includegraphics[width=\linewidth]{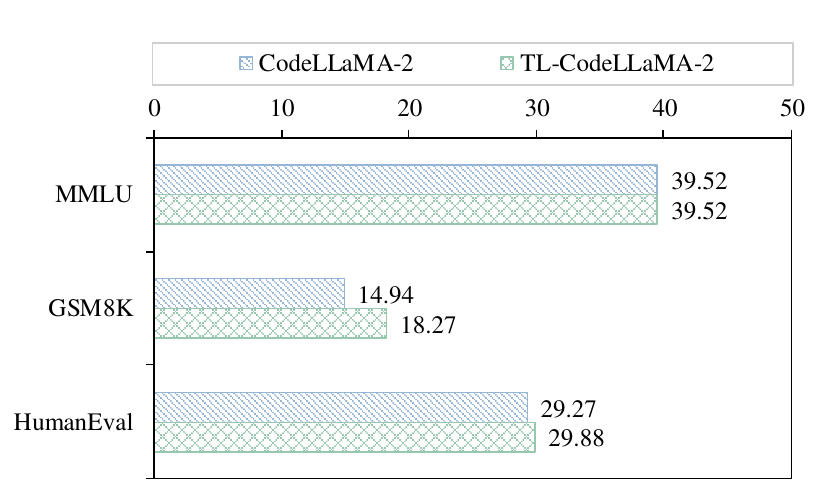}
    \caption{Performance comparison of CodeLLaMA-2 and TL-CodeLLaMA-2 across various general tasks.}
    \label{fig:general}

\end{figure}

\subsection{General Performance}
\label{sec:general}

The strong performance of LLMs is largely attributed to their extensive world knowledge and generalizability acquired during pre-training~\cite{analy-ye}. However, fine-tuning on domain-specific tasks can sometimes compromise this generalizability~\cite{SFT-1, SFT-2}. To assess whether TL-CodeLLaMA-2's general-purpose capabilities are affected by its exclusive training on tool-use data, we evaluate its performance on three general test sets: MMLU (knowledge)~\cite{MMLU}, GSM8K (math)~\cite{GSM8K}, and HumanEval (code)~\cite{HumanEval}, comparing it to CodeLLaMA-2-7B. As shown in Figure~\ref{fig:general}, despite being fine-tuned solely for the tool-use task, TL-CodeLLaMA-2 retains its original task performance and even shows slight improvements in math and coding abilities. This is because our method requires only a small amount of training data, resulting in minimal changes to the model’s original parameters, thus preserving its knowledge base~\cite{IFT, knowledge, QA}. Furthermore, the enhancement in tool-use ability appears to improve the model's reasoning capacity, contributing to better performance in math and coding tasks. These findings further underscore the broad applicability of our approach.

\section{Conclusion}
In this paper, we introduce TL-Training, a novel paradigm for training LLMs specifically for tool use. Our approach mitigates the impact of erroneous interaction data, adaptively adjusts token weights, and introduces a reward mechanism tailed for tool use to facilitate PPO-based reinforcement learning. This methodology not only enhances LLMs' tool-use capabilities but also improves their robustness in noisy environments, all while preserving strong general performance across a range of tasks. Our findings demonstrate the effectiveness of TL-Training in addressing real-world challenges in tool use, offering a promising direction for future research in improving LLM interaction capabilities and adaptability.

\section*{Limitations}

While we propose a novel paradigm for training LLMs in tool use, our work still has a few limitations. First, we do not construct large-scale training data. However, despite using only 1,217 data samples, our results show that we match or even surpass the best current tool-use performance, highlighting the strengths of our approach. Second, we design a reward function based on tool feedback for tool use directly, without training a separate reward model. Nonetheless, we experimentally demonstrate the effectiveness of our reward function. In future work, we plan to explore training a reward model specifically for tool learning to further improve model performance.

\section*{Acknowledgments}
The authors wish to thank the anonymous reviewers for their helpful comments. This work was partially funded by National Natural Science Foundation of China (No.62476061,62206057), Shanghai Rising-Star Program (23QA1400200), Natural Science Foundation of Shanghai (23ZR1403500).

\bibliography{custom}

\newpage
\appendix
\onecolumn
\section{Theorems and Proofs}
\label{sec:proof}
In this paper, we propose $\mathcal{L}_{MAE}$ and $\mathcal{L}_{PKT}$, aimed at enhancing the model's ability to utilize tools during the SFT stage. Additionally, we provide theoretical explanations of the effectiveness of these strategies.

\begin{theorem}
During the SFT stage for LLM in tool use, gradient updates resulting from incorrect interaction paths in the training data can adversely impact the model's ability to choose the appropriate tool.
\end{theorem}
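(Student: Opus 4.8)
The plan is to isolate a single autoregressive decision point at which the model must emit the token corresponding to a tool, and to show that a standard cross-entropy gradient step driven by an erroneous target $t_e \in \mathbb{T}_e$ provably shifts probability mass away from the correct tool $t_c$ and toward $t_e$. I would model the decision through the logits $z = (z_t)_t$ that the model assigns to candidate tool tokens, with $p_M(t \mid \cdot) = \mathrm{softmax}(z)_t$, and consider the single loss term contributed by this position, $\mathcal{L} = -\log p_M(t_e \mid \cdot)$. Restricting to this term is legitimate because the ordinary SFT objective is exactly $\mathcal{L}_{MAE}$ from Section~\ref{sec:block} with $\mathbb{T}_e = \emptyset$, i.e.\ a sum of per-token negative log-likelihoods whose gradient decomposes additively over positions.

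The first key step is the classical softmax--cross-entropy identity $\partial \mathcal{L} / \partial z_t = p_M(t \mid \cdot) - \mathbb{I}(t = t_e)$. A single gradient-descent step with step size $\eta > 0$ therefore updates $z_{t_e} \mapsto z_{t_e} + \eta\,(1 - p_M(t_e \mid \cdot))$ while, for the correct tool, $z_{t_c} \mapsto z_{t_c} - \eta\, p_M(t_c \mid \cdot)$. Since $0 < p_M(t_e \mid \cdot) < 1$ and $p_M(t_c \mid \cdot) > 0$, the erroneous logit strictly increases and the correct logit strictly decreases; by monotonicity of the softmax in one coordinate relative to its competitors, the post-update probability of $t_e$ rises and that of $t_c$ falls. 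This is precisely the adverse impact on the ability to choose the appropriate tool asserted by the theorem, and it explains why $\mathcal{L}_{MAE}$, which deletes the $t_e$ terms from the sum, removes the offending gradient altogether.

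I would then tie the logit-level statement back to the prefix-sharing phenomenon documented in Table~\ref{tab:error}: because an incorrectly invoked tool frequently shares a prefix with the correct one, the decisive competition occurs at the first distinguishing token, where $t_e$ and $t_c$ are near neighbours in logit space and the mass transferred from $t_c$ to $t_e$ is largest. This both sharpens the conclusion and connects the mechanism to the empirical observation that motivates the method, making the theorem quantitatively consistent with the measured error statistics.

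The main obstacle I anticipate is lifting this clean logit-space argument to the actual parameter update, since the model is trained by updating $\theta$ rather than logits directly. The rigorous statement requires pushing the gradient through the chain rule, $\nabla_\theta \mathcal{L} = J^\top \nabla_z \mathcal{L}$ with $J = \partial z / \partial \theta$, after which the induced change in $p_M(t_e)$ and $p_M(t_c)$ depends on $J$ and is only guaranteed to first order in $\eta$. I would resolve this either by (i) restricting the argument to the readout/unembedding parameters, for which $J$ is explicit and the sign of the effect is preserved, or (ii) phrasing the result as a strict first-order (local) decrease of $p_M(t_c)$ under a smoothness assumption on $\mathcal{M}$, which is enough for the ``can adversely impact'' (existence-of-mechanism) form of the claim. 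An alternative population-level route --- showing that the unique minimizer of the corrupted objective places strictly positive mass on $\mathbb{T}_e$ --- sidesteps the Jacobian entirely, at the cost of characterizing a minimizer under a non-convex parametrization, which I would handle by passing to the non-parametric limit where the minimizer is just the empirical conditional distribution.
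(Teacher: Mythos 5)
Your proposal is correct in substance and is considerably more rigorous than the paper's own argument, which stops well short of the logit-level mechanism you develop. The paper's proof merely writes the SFT loss as a per-step sum of negative log-likelihoods, splits its gradient into a correct-call part and an erroneous-call part \(G_{\text{error}}\), and then asserts verbally that the latter ``encourages the model to replicate erroneous tool selections''; it never quantifies how a parameter update translates into a probability shift, and it silently elides exactly the logits-versus-parameters gap that you flag as the main obstacle. Your route shares the paper's starting point (the additive per-token decomposition, i.e.\ standard SFT as \(\mathcal{L}_{MAE}\) with \(\mathbb{T}_e=\emptyset\)) but then actually proves a local version of the claim via the softmax--cross-entropy identity, and both of your repairs of the chain-rule issue are sound: restricting to the unembedding rows gives the exact signed logit changes at the fixed hidden state \(h\) (each scaled by \(\|h\|^2\), since \(h\) does not depend on those rows), while the first-order formulation under smoothness matches the theorem's existential ``can adversely impact'' phrasing. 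The population-level alternative (the nonparametric minimizer of the corrupted objective is the empirical conditional, which places strictly positive mass on \(\mathbb{T}_e\)) is likewise legitimate and absent from the paper.

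One step needs a correction: the claim that ``the post-update probability of \(t_e\) rises and that of \(t_c\) falls'' does not follow from single-coordinate monotonicity of the softmax, because the update moves all logits simultaneously and non-uniformly --- every non-target logit \(z_t\) drops by \(\eta\,p_M(t\mid\cdot)\), not by a common amount. To first order, \(\mathrm{d}p_{t_c}=\eta\,p_{t_c}\bigl(\sum_t p_t^2-p_{t_c}-p_{t_e}\bigr)\), which is \emph{positive} whenever a third tool dominates the distribution (e.g.\ \(p_{t''}=0.9\), \(p_{t_e}=0.05\), \(p_{t_c}=0.01\)): pushing down the dominant wrong competitor can raise \(p_{t_c}\) in absolute terms, so the absolute-probability form of your claim fails. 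What always holds is the pairwise statement: the logit gap \(z_{t_c}-z_{t_e}\) decreases strictly by \(\eta\bigl(p_{t_c}+1-p_{t_e}\bigr)\) (exactly, for a step on the logits or on the unembedding rows), hence the odds ratio \(p_{t_c}/p_{t_e}\) strictly falls. That pairwise version is the right formalization of ``ability to choose the appropriate tool'' against the erroneous one, and it is all you need both for the theorem and for your tie-in to the prefix statistics of Table~\ref{tab:error}, where the decisive competition is precisely between \(t_c\) and \(t_e\) at the first distinguishing token. (Your further remark that the transferred mass ``is largest'' at that token is a heuristic your argument does not establish; drop it or mark it as such.) With that reformulation, your proof is complete and strictly stronger than the paper's.
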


\begin{proof}
Let \(\mathcal{M}\) be an LLM that interacts with a set of tools \(\mathbb{T}\) to answer the user query \(q\). At each step \(s\), the model selects a tool \(t_s \in \mathbb{T}\) based on the query and the history of tool selections and feedback:
\[
t_{s+1} = \mathcal{M}(\cdot \mid q, \mathbb{T}, t_{0..s}, o_{0..s}),
\]
where \(o_s\) is the feedback received after calling tool \(t_s\).

Consider a dataset \(\mathbb{D}\) comprising interaction sequences \((q, t_{0..s}, o_{0..s})\), which includes both correct and erroneous tool calls. Let \(\mathbb{T}_e \subseteq \{t_0, t_1, \dots, t_s\}\) denote the set of erroneous tool calls identified via analysis of feedback \(o_s\).

The standard loss function during SFT aims to maximize the likelihood of the model's tool selections over the entire dataset:
\[
\mathcal{L} = -\sum_{\mathbb{D}} \sum_{t_s} \log p_\mathcal{M}(t_{s} \mid q, \mathbb{T}, t_{0..s-1}, o_{0..s-1}).
\]

The gradient of this loss with respect to the model parameters \(\theta\) is:
\[
\nabla_\theta \mathcal{L} = -\sum_{\mathbb{D}} \sum_{t_s} \nabla_\theta \log p_\mathcal{M}(t_{s} \mid q, \mathbb{T}, t_{0..s-1}, o_{0..s-1}).
\]

This gradient comprises contributions from both correct and erroneous tool calls. The gradient component arising from erroneous tool calls is:
\[
G_{\text{error}} = -\sum_{\mathbb{D}} \sum_{t_{s} \in \mathbb{T}_e} \nabla_\theta \log p_\mathcal{M}(t_{s} \mid q, \mathbb{T}, t_{0..s-1}, o_{0..s-1}).
\]

These gradients encourage the model to replicate erroneous tool selections, thereby misguiding its learning process. Specifically, they can increase the likelihood of the model making incorrect tool calls in future interactions, which negatively impacts its performance.
By including erroneous tool calls in the gradient updates, the model parameters \(\theta\) are adjusted in directions that do not align with optimal decision-making. This is detrimental because it interferes with the model's ability to learn the correct sequence of tool selections that effectively resolve user queries.
Therefore, errors in the training data introduce gradient updates that adversely affect the model's performance.
\end{proof}

To mitigate this effect, we propose to modify the loss function to exclude erroneous tool calls from back-propagation:
\[
\mathcal{L}_{{MAE}} = -\sum_{\mathbb{D}} \sum_{t_{s} \notin \mathbb{T}_e} \log p_\mathcal{M}(t_{s} \mid q, \mathbb{T}, t_{0..s-1}, o_{0..s-1}).
\]
By omitting the erroneous tool calls from the loss computation, their associated gradients are not used to update the model parameters. This reduces the harmful impact of errors in the training data on the model's performance.

\begin{theorem}
During the gradient update process in SFT, assigning higher weights to key tokens prioritizes their contribution to the loss function, enabling the model to focus more on these tokens and fit them better.
\end{theorem}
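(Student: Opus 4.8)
The plan is to carry out the same gradient-based argument used for Theorem~1, but now tracking the multiplicative weights $w_s^m$ rather than merely including or excluding terms. First I would write the weighted objective in per-token form as $\mathcal{L}_{PKT} = \sum_{\mathbb{D}} \sum_{t_s} \sum_{t_s^m} w_s^m \, \ell_s^m(\theta)$, where $\ell_s^m(\theta) = -\log p_\mathcal{M}(t_s^m \mid q, \mathbb{T}, t_{0..s-1}, o_{0..s-1}, t_s^0 \ldots t_s^{m-1})$ is the negative log-likelihood of the $m$-th token of tool $t_s$ and, by the definition in Section~\ref{sec:adapt}, $w_s^m = \mathrm{CLIP}(|NK_s|/|K_s|, 1, w_{\max}) \ge 1$ for key tokens and $w_s^m = 1$ otherwise. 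Differentiating gives $\nabla_\theta \mathcal{L}_{PKT} = \sum_{\mathbb{D}} \sum_{t_s} \sum_{t_s^m} w_s^m \, \nabla_\theta \ell_s^m(\theta)$, so the contribution of each key token to the descent direction is amplified by its weight relative to the unweighted SFT gradient.

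Next I would translate this amplified contribution into the claimed ``better fit'' through a first-order analysis of a single gradient step $\theta' = \theta - \eta \nabla_\theta \mathcal{L}_{PKT}$. Taylor-expanding the per-token loss of a fixed target token $\ell_{s}^{m}$ about $\theta$ yields, to first order in the step size $\eta$,
\[
\ell_{s}^{m}(\theta') - \ell_{s}^{m}(\theta) = -\eta \big\langle \nabla_\theta \ell_{s}^{m}(\theta),\, \nabla_\theta \mathcal{L}_{PKT}(\theta) \big\rangle + O(\eta^2).
\]
Expanding the inner product over all tokens, the diagonal self-term equals $-\eta\, w_{s}^{m} \, \|\nabla_\theta \ell_{s}^{m}(\theta)\|^2$. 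Since this term scales linearly with $w_{s}^{m} \ge 1$, the guaranteed first-order decrease in the loss of a key token is at least $w_{s}^{m}$ times larger than under the unweighted objective, which is precisely the statement that higher weights make the model fit key tokens better. I would then note that the same step leaves non-key tokens ($w = 1$) with their ordinary first-order decrease, so the reweighting strictly reallocates fitting effort toward the key tokens.

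The main obstacle is handling the off-diagonal cross-terms $\langle \nabla_\theta \ell_{s}^{m}, \nabla_\theta \ell_{s'}^{m'}\rangle$ for $(s',m') \ne (s,m)$, which couple the update of one token to the gradients of all others and which the clean $w_s^m \|\cdot\|^2$ conclusion ignores. The cleanest way to dispose of these is to assume the target key token's gradient is approximately orthogonal to those of the remaining tokens (a diagonally dominant Gram matrix of per-token gradients), so that the self-term governs the sign and magnitude of the loss change; alternatively I would state the result in expectation over $\mathbb{D}$, where independent per-token gradient noise causes the cross-terms to average out while the weighted diagonal survives. A secondary point to state explicitly is that the first-order argument presumes a sufficiently small $\eta$ so the $O(\eta^2)$ remainder cannot overturn the sign. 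Under these standard stipulations the amplification factor $w_s^m$ controls the relative fitting rate, completing the proof.
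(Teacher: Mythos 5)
Your proposal is correct, and its spine coincides with the paper's proof: both write the weighted objective $\mathcal{L}_{PKT}$, differentiate to get $\nabla_\theta \mathcal{L}_{PKT} = -\sum_{\mathbb{D}}\sum_{t_s}\sum_{t_s^m} w_s^m \nabla_\theta \log p_\mathcal{M}(t_s^m \mid \cdots)$, and observe that a key token's contribution is amplified by $w_s^m \ge 1$. The difference is where each argument stops. The paper halts at the amplification observation: it notes $\lVert w_s^m \cdot \nabla_\theta \log p_\mathcal{M}(\cdots)\rVert \propto w_s^m$ and concludes directly that gradient descent ``prioritizes'' the higher-weighted tokens and therefore fits them better, without ever connecting a larger gradient contribution to a larger decrease in that token's own loss. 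Your first-order Taylor analysis of the step $\theta' = \theta - \eta\nabla_\theta\mathcal{L}_{PKT}$, isolating the diagonal self-term $-\eta\, w_s^m \lVert \nabla_\theta \ell_s^m(\theta)\rVert^2$, supplies exactly this missing link, and your identification of the off-diagonal cross-terms $\langle \nabla_\theta \ell_s^m, \nabla_\theta \ell_{s'}^{m'}\rangle$ as the obstacle is the genuinely new content: the paper's version silently assumes these away, since in principle interference from other tokens' gradients could erase or reverse a key token's first-order improvement, whereas you make the required hypothesis (diagonal dominance of the per-token gradient Gram matrix, or cancellation in expectation over $\mathbb{D}$) explicit, together with the small-$\eta$ condition needed to control the $O(\eta^2)$ remainder. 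What the paper's looser route buys is brevity, which is arguably adequate for motivating a training heuristic; what yours buys is a theorem-shaped statement --- under stated assumptions, the guaranteed first-order decrease of a key token's loss scales linearly in $w_s^m$ relative to the unweighted objective --- at the cost of assumptions that are standard but not verified for actual LLM training. One small caveat to note: since $w_s^m = \mathrm{CLIP}(|NK_s|/|K_s|, 1, w_{\max})$ equals $1$ whenever $|NK_s| \le |K_s|$, the amplification is only weak in general, so your ``strictly reallocates'' phrasing should be softened to hold only when the clipped ratio exceeds $1$.
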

\begin{proof}
Let \(\mathcal{M}\) be an LLM interacting with a set of tools \(\mathbb{T}\) to answer the user query \(q\). Each tool \(t_s\) used at step \(s\) is a sequence of tokens:
\[
t_s = (t_s^0, t_s^1, \ldots, t_s^{l_s}),
\]
where \(l_s\) is the length of the token sequence for tool \(t_s\).

For each tool \(t_s\), we categorize its tokens into two sets:
\[
K_s = \{ t_s^m \in t_s \mid t_s^m \text{ is a key token} \},
\]
\[
NK_s = \{ t_s^m \in t_s \mid t_s^m \text{ is not a key token} \}.
\]

We assign weights \(w_s^m\) to each token \(t_s^m\) as follows:
\[
w_s^m =
\begin{cases}
\text{CLIP}\left( \frac{|NK_s|}{|K_s|},\ 1,\ w_{\text{max}} \right) & \text{if } t_s^m \in K_s, \\
1 & \text{if } t_s^m \in NK_s,
\end{cases}
\]
where \(|K_s|\) and \(|NK_s|\) denote the number of key and non-key tokens in \(t_s\), respectively, \(w_{\text{max}}\) is the maximum adjustment multiplier, and \(\text{CLIP}(x,\ \text{min},\ \text{max})\) constrains \(x\) to the interval \([\text{min},\ \text{max}]\).

The modified loss function that prioritizes key tokens is:
\[
\mathcal{L}_{PKT} = -\sum_{\mathbb{D}} \sum_{t_s} \sum_{t_s^m} w_s^m \cdot \log p_\mathcal{M}\left( t_s^m \mid q,\ \mathbb{T},\ t_{0..s-1},\ o_{0..s-1},\ t_s^0 \ldots t_s^{m-1} \right),
\]
where \(p_\mathcal{M}\) is the probability assigned by the model to token \(t_s^m\), given the context.

The gradient of the loss with respect to the model parameters \(\theta\) is:
\[
\nabla_\theta \mathcal{L}_{PKT} = -\sum_{\mathbb{D}} \sum_{t_s} \sum_{t_s^m} w_s^m \cdot \nabla_\theta \log p_\mathcal{M}\left( t_s^m \mid q,\ \mathbb{T},\ t_{0..s-1},\ o_{0..s-1},\ t_s^0 \ldots t_s^{m-1} \right).
\]

Tokens with higher weights \(w_s^m\) contribute more to the gradient:
\[
\| w_s^m \cdot \nabla_\theta \log p_\mathcal{M}\left( t_s^m \mid q,\ \mathbb{T},\ t_{0..s-1},\ o_{0..s-1},\ t_s^0 \ldots t_s^{m-1} \right) \| \propto w_s^m.
\]

Assuming \(w_s^k > w_s^n\) for key token \(t_s^k\) and non-key token \(t_s^n\), the gradient contribution from \(t_s^k\) is larger:
\[
\| w_s^k \cdot \nabla_\theta \log p_\mathcal{M}\left( t_s^k \mid \cdots \right) \| > \| w_s^n \cdot \nabla_\theta \log p_\mathcal{M}\left( t_s^n \mid \cdots \right) \|.
\]

During gradient descent, the parameter updates prioritize reducing the loss associated with higher-weighted (key) tokens:
\[
\theta' = \theta - \eta \nabla_\theta \mathcal{L}_{PKT}.
\]

As a result, the model adjusts its parameters more significantly to fit the key tokens, improving its ability to generate them correctly. This leads to better fitting of tokens with higher weights.
Therefore, assigning higher weights to key tokens during gradient updates enhances the model's performance on these tokens.
\end{proof}

\clearpage

\section{Details of Datasets}
\label{sec:detail-datasets}

As shown in Table~\ref{tab:dataset}, we construct a custom training set focused on multi-turn tool use and evaluate it using four publicly available test sets.

\paragraph{Training Data}
To train the LLMs using our method, we first construct a training dataset. Since ToolEyes provides a comprehensive set of invocable tools, we use it as a foundation to artificially create 1,217 relevant user requirements. GPT-4o is then employed to interact with these tools and generate the corresponding tool usage trajectories, which form our training set.

While previous studies often construct over 100,000 data points for training~\cite{Toolllm}, we deliberately limit our dataset size. Our main goal is to validate the effectiveness of our approach rather than to scale data volume. Surprisingly, the experimental results in Section~\ref{sec:experiments} show that training on just 1,217 data points using our method matches or even exceeds the performance of leading LLMs.

\paragraph{Test Sets}
To comprehensively evaluate LLM tool-use performance, we use four open-source tool usage test sets. ToolAlpaca (eval\_real), RoTBench (Clean), and BFCL-v3 (executable) are selected for single-turn tool use evaluations, while ToolEyes is used for multi-turn tool use assessment.

\clearpage

\section{Details of Baselines}
\label{sec:detail-base}
In this paper, we select nine existing LLMs from three different sources for a comprehensive comparison with our tool-use LLMs.

\paragraph{Tool-Use LLMs}
\textbf{ToolLLaMA-2-7B} and \textbf{NexusRaven-2-13B} are prominent tool-use LLMs, built on LLaMA-2-7B and CodeLLaMA-2-13B, respectively. These models are trained on a large volume of tool-use data, enhancing their ability to interact with tools. For example, ToolLLaMA-2-7B is trained on over 120,000 data points covering more than 16,000 tools using standard SFT, significantly boosting its tool-use capabilities.

\paragraph{Open-Source LLMs}
Among the existing open-source, general-purpose LLMs, \textbf{ChatGLM-4-chat-9B}, \textbf{Qwen-2-Instruct-7B}, \textbf{LLaMA-3.1-Instruct-8B} and \textbf{Qwen-2.5-Instruct-7B} have been specifically optimized for tool use, enabling them to interact with various tools to fulfill user needs.

\paragraph{Closed-Source LLMs}
The GPT family represents some of the most advanced LLMs, demonstrating strong performance not only in general-purpose tasks but also in tool use, with notable generalization capabilities. For this study, we select \textbf{GPT-3.5-turbo}, \textbf{GPT-4o}, and \textbf{GPT-4-turbo} as leading representatives of the GPT series for comparison.

\paragraph{Our Model}
We apply the TL-Training paradigm to CodeLLaMA-2-7B, using the custom dataset of 1,217 examples, to develop \textbf{TL-CodeLLaMA-2}, a specialized tool-use LLM. Compared to the other models in this study, TL-CodeLLaMA-2 is the smallest and trained on the least amount of data.

\clearpage

\section{Prompt Template}
\label{sec:template}
We use the official prompt of each LLM for tool use, which are provided from Table~\ref{tab:template-toolllama} to Table~\ref{tab:template-tl}.

\begin{table}[H]
    \caption{An example for tool use of ToolLLaMA-2-7B.}
    \label{tab:template-toolllama}
    \centering
    \begin{tabular}{p{0.95\linewidth}}
    \toprule
    System:\newline
    You are AutoGPT, you can use many tools(functions) to do the following task.\newline
    First I will give you the task description, and your task start.\newline
    At each step, you need to give your thought to analyze the status now and what to do next, with a function call to actually excute your step. Your output should follow this format:\newline
    Thought:\newline
    Action\newline
    Action Input:\newline
    \newline
    After the call, you will get the call result, and you are now in a new state.\newline
    Then you will analyze your status now, then decide what to do next...\newline
    After many (Thought-call) pairs, you finally perform the task, then you can give your finial answer.\newline
    Remember: \newline
    1.the state change is irreversible, you can't go back to one of the former state, if you want to restart the task, say \"I give up and restart\".\newline
    2.All the thought is short, at most in 5 sentence.\newline
    3.You can do more then one trys, so if your plan is to continusly try some conditions, you can do one of the conditions per try.\newline
    Let's Begin!\newline
    Task description: You should use functions to help handle the real time user querys. Remember:\newline
    1.ALWAYS call \"Finish\" function at the end of the task. And the final answer should contain enough information to show to the user,If you can't handle the task, or you find that function calls always fail(the function is not valid now), use function Finish->give\_up\_and\_restart.\newline
    2.Do not use origin tool names, use only subfunctions' names.\newline
    \newline
    Specifically, you have access to the following APIs: [\{``type": ``function", ``function": \{``name": ``random\_advice", ``description": ``Returns a random advice slip as a slip object.", ``parameters": \{``type": ``object", ``properties": \{\}, ``required": []\}\}\}]\newline
    User: Can you fetch some random advice for me?\newline
    Assistant:\\ \bottomrule
    \end{tabular}
\end{table}

\begin{table}[H]
    \caption{An example for tool use of NexusRaven-2-13B.}
    \label{tab:template-nexusraven}
    \centering
    \begin{tabular}{p{0.95\linewidth}}
    \toprule
    Function:\newline
    def random\_advice():\newline
    $~~~~$``````\newline
    $~~~~$Returns a random advice slip as a slip object.\newline
    \newline
    $~~~~$Args: \newline
    \newline
    $~~~~$Returns:\newline
    $~~~~$string: The feedback from the tool.\newline
    $~~~~$''''''\newline
    \newline
    User Query: Can you fetch some random advice for me?$<$human\_end$>$
    \\ \bottomrule
    \end{tabular}
\end{table}

\clearpage

\begin{CJK}{UTF8}{gbsn}
\begin{table}[H]
    \caption{An example for tool use of ChatGLM-4-chat-9B.}
    \label{tab:template-chatglm}
    \centering
    \begin{tabular}{p{0.95\linewidth}}
    \toprule
    $<|$system$|>$\newline
    你是一个名为 ChatGLM 的人工智能助手。你是基于智谱AI训练的语言模型 GLM-4 模型开发的，你的任务是针对用户的问题和要求提供适当的答复和支持。\newline
    \newline
    \# 可用工具\newline
    \newline
    \#\# random\_advice\newline
    \newline
    \{``name": ``random\_advice", ``description": ``Returns a random advice slip as a slip object.", ``parameters": \{``type": ``object", ``properties": \{\}, ``required": []\}\}\newline
    在调用上述函数时，请使用 Json 格式表示调用的参数。$<|$user$|>$\newline
    Can you fetch some random advice for me?$<|$assistant$|>$
    \\ \bottomrule
    \end{tabular}
\end{table}
\end{CJK}

\begin{table}[H]
    \caption{An example for tool use of Qwen-2-Instruct-7B.}
    \label{tab:template-qwen}
    \centering
    \begin{tabular}{p{0.95\linewidth}}
    \toprule
    $<|$im\_start$|>$system\newline
    You are a helpful assistant.$<|$im\_end$|>$$<|$im\_start$|>$user\newline
    Answer the following questions as best you can. You have access to the following tools:\newline
    \newline
    random\_advice: Call this tool to interact with the random\_advice API. What is the random\_advice API useful for? Returns a random advice slip as a slip object. Parameters: [] Format the arguments as a JSON object.\newline
    \newline
    Use the following format:\newline
    \newline
    Question: the input question you must answer\newline
    Thought: you should always think about what to do\newline
    Action: the action to take, should be one of [random\_advice]\newline
    Action Input: the input to the action\newline
    Observation: the result of the action\newline
    ... (this Thought/Action/Action Input/Observation can be repeated zero or more times)\newline
    Thought: I now know the final answer\newline
    Final Answer: the final answer to the original input question\newline
    \newline
    Begin!\newline
    \newline
    Question: Can you fetch some random advice for me?$<|$im\_start$|>$assistant
    \\ \bottomrule
    \end{tabular}
\end{table}

\clearpage

\begin{table}[H]
    \caption{An example for tool use of LLaMA-3.1-Instruct-8B.}
    \label{tab:template-llama3.1}
    \centering
    \begin{tabular}{p{0.95\linewidth}}
    \toprule
    $<|$begin\_of\_text$|><|$start\_header\_id$|>$system$<|$end\_header\_id$|>$\newline\newline
    Environment: ipython\newline
    Cutting Knowledge Date: December 2023\newline
    Today Date: 26 Jul 2024\newline \newline
    $<|$eot\_id$|><|$start\_header\_id$|>$user$<|$end\_header\_id$|>$\newline\newline
    Given the following functions, please respond with a JSON for a function call with its proper arguments that best answers the given prompt.\newline\newline
    Respond in the format {``name'': function name, ``parameters'': dictionary of argument name and its value}.Do not use variables.\newline\newline
    \{\newline
    ``function'': \{\newline
    ``description'': ``Returns a random advice slip as a slip object.'',\newline
    ``name'': ``random\_advice'',\newline
    ``parameters'': \{\newline
    ``properties'': \{\},\newline
    ``required'': [],\newline
    ``type'': ``object''\newline
    \}\newline
    \},\newline
    ``type'': ``function''\newline
    \}\newline\newline
    Can you fetch some random advice for me?$<|$eot\_id$|><|$start\_header\_id$|>$assistant$<|$end\_header\_id$|>$\newline
    \\ \bottomrule
    \end{tabular}
\end{table}

\begin{table}[H]
    \caption{An example for tool use of Qwen-2.5-Instruct-7B.}
    \label{tab:template-qwen2.5}
    \centering
    \begin{tabular}{p{0.95\linewidth}}
    \toprule
    $<|$im\_start$|>$system\newline
    You are Qwen, created by Alibaba Cloud. You are a helpful assistant.\newline\newline
    \# Tools\newline\newline
    You may call one or more functions to assist with the user query.\newline\newline
    You are provided with function signatures within $<$tools$>$$<$/tools$>$ XML tags:\newline
    $<$tools$>$\newline
    \{``function'': \{``description'': ``Returns a random advice slip as a slip object.'', ``name'': ``random\_advice'', ``parameters'': \{``properties'': \{\}, ``required'': [], ``type'': ``object''\}\}, ``type'': ``function''\}\newline
    $<$/tools$>$\newline\newline
    For each function call, return a json object with function name and arguments within $<$tool\_call$>$$<$/tool\_call$>$ XML tags:\newline
    $<$tool\_call$>$\newline
    \{``name'': $<$function-name$>$, ``arguments'': $<$args-json-object$>$\}\newline
    $<$/tool\_call$>$$<|$im\_end$|>$\newline
    $<|$im\_start$|>$user\newline
    Can you fetch some random advice for me?$<|$im\_end$|>$\newline
    $<|$im\_start$|>$assistant
    \\ \bottomrule
    \end{tabular}
\end{table}

\begin{table}[H]
    \caption{An example for tool use of GPT series models. The tools are send with the `tools' key of `OpenAI().chat.completions.create()'.}
    \label{tab:template-gpt}
    \centering
    \begin{tabular}{p{0.95\linewidth}}
    \toprule
    Can you fetch some random advice for me?
    \\ \bottomrule
    \end{tabular}
\end{table}

\begin{table}[H]
    \caption{An example for tool use of TL-CodeLLaMA-2.}
    \label{tab:template-tl}
    \centering
    \begin{tabular}{p{0.95\linewidth}}
    \toprule
    System: Function:\newline
    def random\_advice():\newline
    $~~~~$``````\newline
    $~~~~$Returns a random advice slip as a slip object.\newline
    \newline
    $~~~~$Args: \newline
    $~~~~$''''''\newline
    \newline
    User: Can you fetch some random advice for me?\newline
    Assistant:\\ \bottomrule
    \end{tabular}
\end{table}

\clearpage

\section{Examples of Tool Call and Feedback}
\label{sec:information}

In Table~\ref{tab:example}, we present some examples of potential scenarios that may arise during tool calls.

\begin{table}[H]
    \centering
    \caption{Examples of various scenarios that may arise during tool calls.}
    \label{tab:example}
    \resizebox{\linewidth}{!}
    {
    \begin{tabular}{m{0.17\linewidth} m{0.16\linewidth} m{0.27\linewidth} m{0.3\linewidth}}
    \toprule
     \textbf{Category}    & \textbf{Models}  & \textbf{Output}  & \textbf{Feedback} \\ \midrule
     All Right & TL-CodeLLaMA-2 & random\_advice() & \{``slip'': \{``id'': 52, ``advice'': ``Don't promise what you can't deliver.''\}\}\\
      Tool Instability   & GPT-4-turbo & \{``arguments'': ``\{\}'', ``name'': ``get\_currency''\} & HTTPSConnectionPool(host=..., port=443): Max retries exceeded with url: ...\\
      
      Tool Instability   &ChatGLM-4-chat &get\_exchange\_rate\newline
      \{``from\_currency'': ``EUR'', ``to\_currency'': ``GBP''\} & Invalid API call. Currency codes might be invalid.\\
      
      Tool Instability   & Qwen-2-Instruct & Thought: ...\newline
      Action: google\_trends\_search\newline
      Action Input: \{``query'': ``school'', ``data\_type'': ``related\_topics''\} & 400 Client Error: Bad Request for url: ...\\
      
      Tool Call Failure   &ChatGLM-4-chat & search\_country\newline
      \{``query'': ``IN''\}& \{``error'': ``Response error.''\}\\
      
      Tool Call Failure   & GPT-3.5-turbo & \{``arguments'': ``\{``query'': ``Bitcoin'', ``data\_type'': ``RELATED\_QUERIES'', ``limit'': ``5''\}'', ``name'': ``google\_trends\_search''\} & `$<=$' not supported between instances of `str' and `int'\\
      
      Tool Call Failure   & ToolLLaMA-2 & Thought: ...\newline
      Action: get\_nobel\_results\newline
      Action Input: \{
      ``year'': ``2018''
        \} & Object of type bytes is not JSON serializable\\
      
      Tool Hallucination   &Qwen-2.5-Instruct & \{``arguments'': ``\{``query'': ``most popular tourist destinations in Europe''\}'', ``name'': ``google\_search''\} & name google\_search if not defined\\
      
     Parameter Hallucination   & ToolLLaMA-2 & Thought: ...\newline
     Action: get\_news\_headlines\newline
     Action Input: \{  ``api\_key'': ``your\_api\_key'',
     ``q'': ``technology'',
     ``sortBy'': ``popularity''\} & get\_news\_headlines() got an unexpected keyword argument `sortBy'\\
     
     Parameter Missing   &Qwen-2.5-Instruct & \{``arguments'': ``\{``q'': ``London'', ``days'': ``7'', ``api\_key'': ``your\_api\_key''\}'', ``name'': ``forecast''\} & forecast() missing 1 required positional argument: aqi \\
     
     Others   &GPT-4o &\{``arguments'': ``\{``website'': ``www.mywebsite.com''\}'', ``name'': ``analyze\_scan''\} & \{``error'': ``Recently completed scan for www.mywebsite.com not found''\} \\ \bottomrule
         
    \end{tabular}
    }
\end{table}

\end{document}